\documentclass[letterpaper]{article}
\usepackage{aaai2027}
\usepackage[hyphens]{url}
\usepackage{graphicx}
\usepackage{natbib}
\usepackage{caption}
\usepackage{algorithm}
\usepackage{algorithmic}
\usepackage{booktabs}
\usepackage{multirow}
\usepackage{colortbl}
\usepackage{amsmath}
\usepackage{amssymb}
\usepackage{amsthm}
\usepackage{xspace}
\usepackage{tabularx}

\newcommand{\method}{\textbf{GraphBU}\xspace}
\definecolor{ourhighlight}{RGB}{255,246,204}

\newtheorem{proposition}{Proposition}

\title{\method: MILP Instance Generation with Graph-Native Block Units}
\author{
Xiaolei Guo\textsuperscript{\rm 1},
Chenyu Zhou\textsuperscript{\rm 1},
Jianghao Lin\textsuperscript{\rm 1}\corresponding,
Dongdong Ge\textsuperscript{\rm 1}
}

\affiliations{
\textsuperscript{\rm 1}Shanghai Jiao Tong University\\
\{lionelgxl, chenyuzhou, linjianghao, ddge\}@sjtu.edu.cn\\

}

\begin{document}

\maketitle

\begin{abstract}
Mixed-integer linear programming (MILP) instances used for solver development are hard to obtain when models come from private or application-specific pipelines. A generator must keep the structure that solvers and learned policies rely on. Existing general generators usually choose their generation unit from a formulation template, summary statistics, local graph edits, or blocks found after recombination. These units do not explicitly record how a local part of the MILP is coupled to the rest of the instance. We propose \method, a graph-native generator whose basic unit is a local subproblem plus its interface. The method promotes coupling nodes into master constraints or boundary variables and uses the resulting block units for compatibility-checked replacement. The analysis focuses on the properties needed by this construction: promotion separates interfaces, replacement can preserve feasibility under an interface-slack condition, and the graph construction is invariant to row-column permutations. On MILP instances generation, this unit keeps graph statistics close to the source family, preserves feasibility on most datasets, and improves downstream Predict-and-Search training. Genrated by \method, The average graph-statistical similarity was approximately 0.934, the average feasibility was approximately 96.7\%, and the average increase in the main index of downstream PS was approximately 8.0\%.
\end{abstract}

\section{Introduction}

Mixed-integer linear programming (MILP) is a standard way to model decisions that mix discrete choices with linear constraints, and it appears in applications such as scheduling, planning, logistics, and chip design. In deployed optimization systems, even small changes in solve time can affect throughput, resource utilization, or service quality. Solver development therefore depends on representative instance data: classical solvers use it for parameter tuning and stress testing, while learning-based solvers require it for training\cite{bengio2021mlco,khalil2016learning,nair2020neuralmip,han2023predictsearch}. Benchmarks also need various instances to expose failures that a small test set would miss. The difficulty is that real MILP instances are often expensive to collect, tied to private business data, or generated inside closed modeling pipelines. This motivates the task of MILP instance generation, which aims to produce additional instances that behave like a target family without requiring access to its original modeling pipeline.

\begin{figure}[!t]
\centering
\includegraphics[width=0.82\columnwidth,trim=85pt 86pt 85pt 6pt,clip]{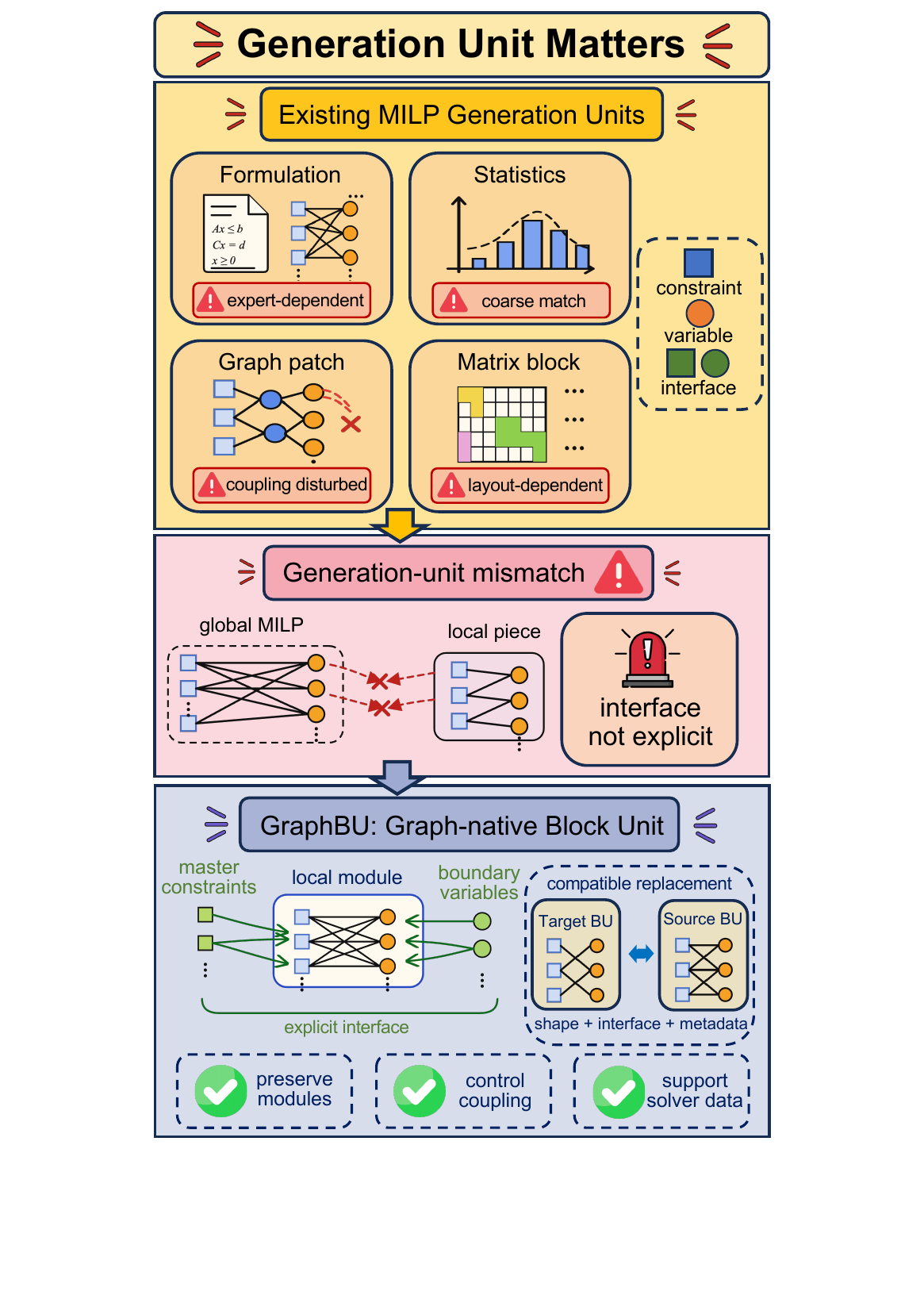}
\caption{Generation-unit mismatch in MILP instance generation. \method uses graph-native block units with explicit interfaces.}
\label{fig:intro_generation_unit}
\end{figure}

A generated MILP is useful only if it remains faithful to the target instance family. Feasibility alone is only a minimal requirement. The generated instances should preserve the scale, sparsity, coefficient patterns, feasibility behavior, and solving difficulty of the source family. This requirement is especially strict for learning-based solver modules, which learn policies from the graph structure of training instances\cite{gasse2019exact,prouvost2020ecole}. If generation changes how constraints and variables are connected, a policy trained on the generated data may not transfer to the original family. The choice of generation unit therefore becomes a design issue: the generator must decide which part of a MILP can be reused without breaking the structure that solvers rely on.

Existing generators expose the same issue from different angles. When the formulation is known, new instances can be sampled from the model template itself\cite{bowly2019stress}. This is often reliable, but it assumes access to the problem type and its mathematical description. Once that assumption is removed, many generators fall back to coarser signals, such as density, degree, or coefficient distributions\cite{bowly2019stress}. These signals are easy to measure, but they do not say which constraints and variables should be treated as one reusable part. Graph-based generation moves closer to the representation used by learned solvers; a local graph edit, however, can still cut through the coupling between a subproblem and the rest of the model\cite{geng2023g2milp}. Matrix-block methods reuse repeated submatrices, but the resulting blocks depend on row-column ordering and do not explicitly encode how a block reconnects to the remaining rows and columns\cite{liu2024milpstudio}. What is missing is the unit itself: a reusable part of the MILP should contain a graph-local subproblem and the interface through which it attaches to the full instance.

We build \method around this unit. \method represents a MILP as a constraint-variable bipartite graph, finds nodes that behave like couplings, and removes them before decomposing the residual graph. If a residual edge still crosses two local blocks, one endpoint is promoted into the interface. Each block unit then contains a local subproblem and the adjacent interface slices needed to reconnect it. Generation replaces a target unit with a source unit only when their shapes, interface dimensions, variable domains, and row metadata match.

In experiments on four MILP families, \method improves the two properties that matter for generated solver data. First, the generated instances remain close to the source families in graph-statistical similarity while preserving feasibility and nontrivial solving behavior. Second, the same generated data improves downstream Predict-and-Search training on original test instances, suggesting that the preserved block-unit structure is useful beyond matching summary statistics.

Our contributions are:
\begin{itemize}
\item We introduce \method, the first graph-native block unit for MILP instance generation. Each unit pairs a local constraint-variable module with an explicit coupling interface, and \method operationalizes this unit through interface detection, block-unit extraction, reusable library construction, and compatibility-checked replacement.
\item We provide theoretical guarantees for the construction, including interface separation, a sufficient feasibility condition under interface slack, and invariance to row-column permutations.
\item We show on four MILP families that \method better preserves graph-statistical similarity and feasibility than representative generators, and that the generated data improves downstream Predict-and-Search on held-out original instances.
\end{itemize}

\section{Related Work}

\textbf{MILP data for solver development.}
Large MILP benchmarks are hard to curate. MIPLIB 2017, for example, uses a data-driven selection process to avoid a benchmark dominated by near-duplicate or trivial instances \cite{miplib2017}. ML4CO makes the same point from the learning side: learned solver components are evaluated on distributions of related instances, not on isolated examples \cite{gasse2022ml4co}. For generated data to play the same role, it must preserve the properties that these benchmarks were selected to test.

\textbf{Learning-enhanced MILP solvers.}
Learning-based solvers often represent a MILP as a constraint-variable bipartite graph \cite{gasse2019exact,prouvost2020ecole}. This representation has been used for branching, neural MIP heuristics, and Predict-and-Search training \cite{khalil2016learning,nair2020neuralmip,han2023predictsearch}. The data requirement is strict: generated instances should expose the same graph patterns that the learned policy will see at test time.

\textbf{MILP instance generation.}
General generators can match coarse statistics such as density or degree profiles \cite{bowly2019stress}. Learned graph generators modify components of the bipartite graph \cite{geng2023g2milp}. Block-structured generators build libraries from reordered coefficient matrices and then apply block operations \cite{liu2024milpstudio}. In these approaches, the reusable unit is still not explicit. \method treats it as a graph-local module plus its coupling interface.

\textbf{MILP structure and decomposition.}
Classical decomposition separates local subproblems from global coupling constraints. Dantzig-Wolfe and Benders-type methods are standard examples \cite{dantzig1960decomposition,benders1962partitioning,geoffrion1972generalized}, and reformulation methods use similar ideas to expose integer-program structure \cite{vanderbeck2010reformulation}. We use this vocabulary for generation rather than for solving. In \method, master constraints and boundary variables record how a reusable local module reconnects to the full instance.

\section{Preliminaries}

\subsection{MILP Instance Generation}

A MILP instance consists of a sparse linear objective, linear constraints, bounds, and integrality restrictions. In this paper, instance generation starts from source instances in a target family and produces additional instances for the same family. The generated data is judged by whether it stays close to the source family in graph statistics and feasibility behavior, and by whether it helps downstream Predict-and-Search training.

\subsection{MILP Graph Representation}

We consider a MILP instance
\begin{equation}
\begin{aligned}
\min_x \quad & c^\top x\\
\mathrm{s.t.}\quad
& a_i^\top x \ \bowtie_i\ b_i,\quad i=1,\ldots,m,\\
& l \le x \le u,\quad x_j \in \mathbb{Z}\ \forall j\in I,
\end{aligned}
\label{eq:milp}
\end{equation}
where $A\in \mathbb{R}^{m\times n}$ is sparse, $a_i^\top$ is row $i$ of $A$, and $\bowtie_i\in\{\le,=,\ge\}$ is the constraint sense. We represent the instance as a weighted bipartite graph $G=(C\cup V,E)$, where $C=\{c_1,\ldots,c_m\}$ denotes constraint nodes and $V=\{v_1,\ldots,v_n\}$ denotes variable nodes. An edge $(c_i,v_j)\in E$ exists if and only if $A_{ij}\ne 0$, with edge attribute $A_{ij}$. Constraint metadata includes the right-hand side and constraint sense; variable metadata includes objective coefficient, lower and upper bounds, and variable type.

The graph is invariant to row and column permutations of the coefficient matrix up to node renaming. A row permutation only relabels constraint nodes, and a column permutation only relabels variable nodes. We use the bipartite graph as the structural object for block-unit discovery and recombination.

\subsection{Local Modules and Interfaces}

A structured MILP often contains local groups of constraints and variables that meet the rest of the model through a smaller set of couplings. We use the terms local module and interface for generation, not for a solver decomposition. A local module is the part we may store and replace. The interface records how that part reconnects to the rest of the instance.

\section{Method}

\subsection{Overview}

Figure~\ref{fig:method_overview} shows the pipeline. \method starts by finding coupling nodes in the bipartite graph and decomposing what remains. Each residual component is stored with the interface slices that touch it. Generation is a constrained replacement step: a target block unit can be replaced only by a source unit with the same local shape, interface dimensions, and metadata signature.

\begin{figure*}[t]
\centering
\includegraphics[width=0.92\textwidth]{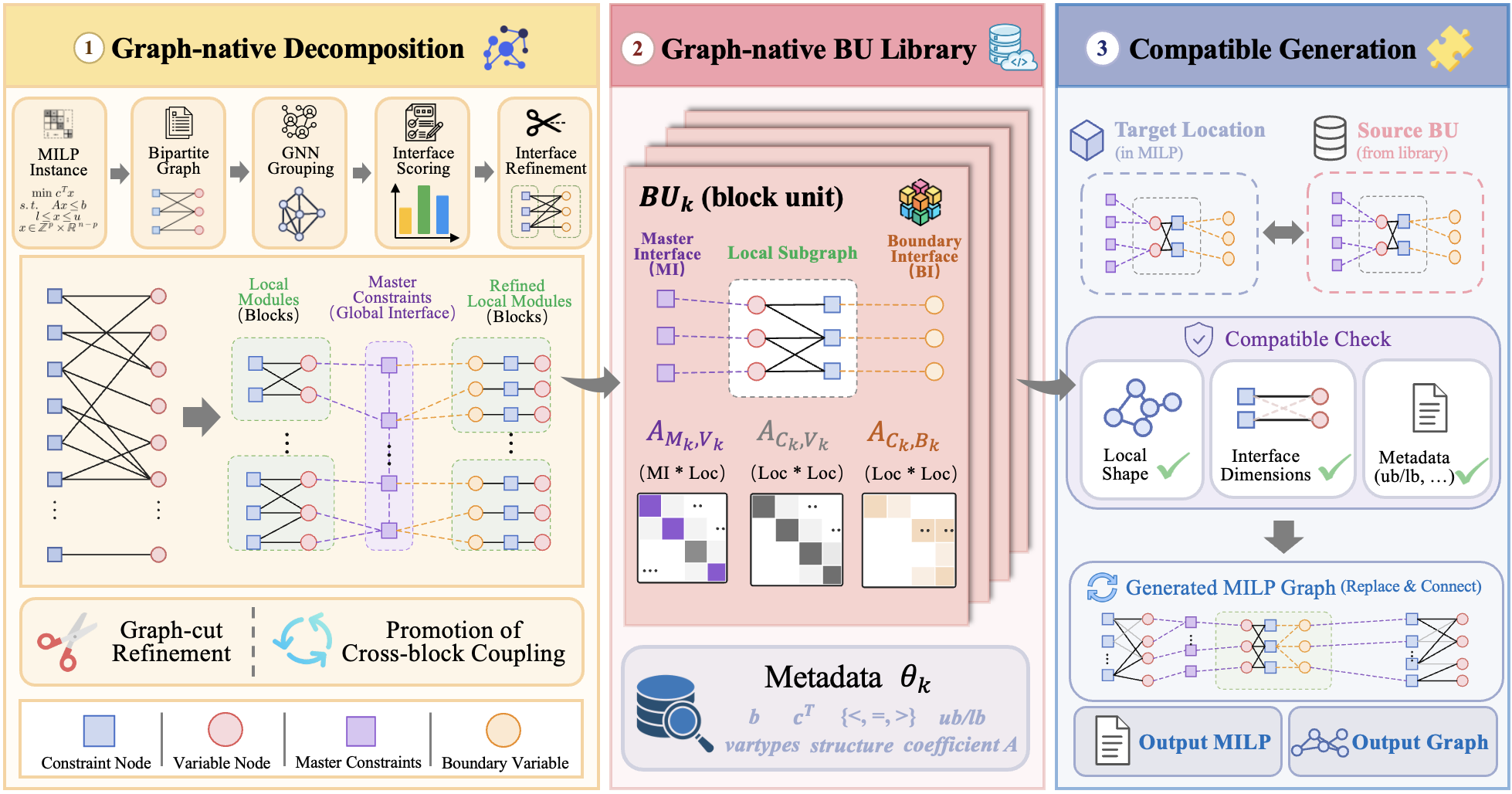}
\caption{\method overview. The three stages are graph-native decomposition, graph-native BU library construction, and compatible generation.}
\label{fig:method_overview}
\end{figure*}

\subsection{Graph-Native Decomposition}

\method first decomposes the MILP bipartite graph into local modules and coupling interfaces. Let $M\subseteq C$ denote master constraints and $B\subseteq V$ denote boundary variables. Removing these interface nodes yields the residual graph
\begin{equation}
G_{\mathrm{res}} = G[(C\setminus M)\cup (V\setminus B)].
\end{equation}
Let $\{(C_k,V_k)\}_{k=1}^{K}$ denote the residual components after connected-component decomposition and optional graph-cut refinement.
For each residual component, the relevant interface is the subset that actually touches that component:
\begin{align}
M_k &= \{m\in M:\exists v\in V_k,\ (m,v)\in E\},\\
B_k &= \{b\in B:\exists c\in C_k,\ (c,b)\in E\}.
\end{align}
The three coefficient slices needed to reinsert the component are grouped as
\begin{equation}
\mathcal{A}_k =
\big(A_{C_k,V_k}, A_{M_k,V_k}, A_{C_k,B_k}\big).
\end{equation}

\paragraph{Interface detection.}
\method starts from an initial grouping of constraints and variables. This grouping can be obtained by graph embeddings followed by clustering; subsequent stages only require group labels
\begin{equation}
g_C:C\rightarrow \mathcal{R}_C,
\qquad
g_V:V\rightarrow \mathcal{R}_V .
\end{equation}
For a constraint $c_i$, let $N(c_i)$ be its neighboring variables. The distribution of its neighbors over variable groups is
\begin{equation}
p_i(r)=
\frac{
|\{v_j\in N(c_i): g_V(v_j)=r\}|
}{
|N(c_i)|
}.
\end{equation}
We define
\begin{align}
\mathrm{span}(c_i) &= |\{r:p_i(r)>0\}|,\\
\mathrm{ent}(c_i) &=
-\frac{1}{\log \mathrm{span}(c_i)}
\sum_{r:p_i(r)>0}p_i(r)\log p_i(r),\\
\mathrm{deg}(c_i) &= |N(c_i)|.
\end{align}
When $\mathrm{span}(c_i)=1$, we set $\mathrm{ent}(c_i)=0$. Constraints are ranked lexicographically by $(\mathrm{span},\mathrm{ent},\mathrm{deg})$; high-ranking constraints are treated as candidate master constraints because they couple multiple variable groups. Boundary variables are detected symmetrically by computing the span, entropy, and degree of neighboring constraint groups.

The scores have different roles. Span finds nodes that touch multiple groups, entropy filters out nodes whose edges are concentrated in only one group, and degree avoids decisions based on very few edges. None of these scores uses row or column coordinates from the coefficient matrix.

\paragraph{Residual decomposition and promotion.}
After selecting candidate interface nodes $M$ and $B$, \method decomposes the residual graph $G_{\mathrm{res}}$. Connected components are accepted as candidate blocks. If a residual component is oversized, we recursively apply graph-cut refinement subject to balance and size constraints; in implementation we use a hybrid routine that applies Stoer--Wagner minimum cut on small components and balanced spectral refinement on larger components. These cuts are used only to propose local blocks; the corresponding edges in the original MILP remain represented through interfaces.

After residual decomposition, \method checks the original graph for cross-block coupling. Let $\beta_C(c)$ and $\beta_V(v)$ denote the current local-block labels of non-interface constraints and variables. The violated edge set is
\begin{equation}
E_{\times} =
\{(c,v)\in E:\beta_C(c)\ne \beta_V(v),\ c\notin M,\ v\notin B\}.
\end{equation}
While $E_{\times}$ is nonempty, \method promotes at least one endpoint of each violated edge into the interface set, using the span/entropy/degree score to choose the more coupling-like endpoint and a fixed constraint-side tie break. It then recomputes the residual blocks and repeats this detect-promote step until no non-interface cross-block edge remains. Graph-cut edges are represented as interfaces instead of being hidden inside stale residual components.

\begin{proposition}[Interface separation]
After cross-block promotion terminates, every edge in the original graph that connects two different local blocks is incident to at least one interface node in $M\cup B$.
\end{proposition}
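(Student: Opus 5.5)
The argument is a contrapositive reading of the termination condition, together with a check that the loop does terminate. Termination is well defined because each iteration with $E_{\times}\neq\emptyset$ moves at least one node from $(C\setminus M)\cup(V\setminus B)$ into $M\cup B$. The interface sets only grow, and they are bounded by the finite set $C\cup V$. Hence the detect--promote loop runs at most $m+n$ times. It stops at the first iteration where the recomputed residual blocks satisfy $E_{\times}=\emptyset$. In the degenerate case it stops when every node has been promoted, and then $E_{\times}=\emptyset$ trivially.

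Next I would fix the final state: the sets $M,B$ and the labels $\beta_C,\beta_V$ of the recomputed residual blocks at termination. Take any edge $(c,v)\in E$ of the original graph. Suppose it connects two different local blocks, meaning both endpoints carry block labels and $\beta_C(c)\neq\beta_V(v)$. Suppose further, for contradiction, that it is incident to no interface node, so $c\notin M$ and $v\notin B$. Then $(c,v)$ satisfies every condition in the definition of $E_{\times}$, so $(c,v)\in E_{\times}$. This contradicts $E_{\times}=\emptyset$ at termination. Therefore $c\in M$ or $v\in B$, which is the claim. Edges touching an interface node already satisfy the conclusion, so no other case needs to be considered.

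The one point I expect needs care is that the statement refers to the \emph{final} block labels, not the labels from before the last promotion. Promotion triggers a recomputation of the residual components and a possible re-application of graph-cut refinement. That recomputation can split blocks and so create new cross-block edges. I would handle this by noting that the emptiness test is evaluated on the recomputed labels in every iteration, including the last. Hence $E_{\times}$ at termination is exactly the cross-edge set of the blocks that are output. Ruling out stale labels this way is the only real obstacle; the remainder of the proof is just the definition of $E_{\times}$.
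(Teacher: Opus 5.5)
Your proof is correct and follows the paper's argument. The paper also shows that the interface set grows monotonically inside the finite set $C\cup V$, so the detect--promote loop terminates, and then reads the claim off the stopping condition $E_{\times}=\emptyset$ evaluated on the final labels $\beta^{(*)}$. You additionally spell out why each iteration with $E_{\times}\neq\emptyset$ strictly enlarges $M\cup B$ (hence at most $m+n$ iterations), which slightly tightens the paper's fixed-point step; the paper states only $I^{(t)}\subseteq I^{(t+1)}$ and leaves that strictness implicit.
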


\begin{proof}
Let
\begin{equation}
\begin{aligned}
I^{(t)} &= M^{(t)}\cup B^{(t)},\\
E_{\times}^{(t)}
&= \{(c,v)\in E:
\beta_C^{(t)}(c)\ne\beta_V^{(t)}(v),\\
&\quad c\notin I^{(t)},\ v\notin I^{(t)}\}.
\end{aligned}
\end{equation}
The promotion step satisfies
\begin{equation}
\forall (c,v)\in E_{\times}^{(t)},\qquad
\{c,v\}\cap I^{(t+1)}\ne\emptyset .
\end{equation}
Since $I^{(t)}\subseteq I^{(t+1)}$ and $|C\cup V|<\infty$, the process reaches a fixed point. At termination,
\begin{equation}
\begin{aligned}
E_{\times}^{(*)}
&= \{(c,v)\in E:
\beta_C^{(*)}(c)\ne\beta_V^{(*)}(v),\\
&\quad c\notin I^{(*)},\ v\notin I^{(*)}\}
=\emptyset .
\end{aligned}
\end{equation}
Therefore
\begin{equation}
\beta_C^{(*)}(c)\ne\beta_V^{(*)}(v)
\Longrightarrow
\{c,v\}\cap (M^{(*)}\cup B^{(*)})\ne\emptyset ,
\end{equation}
which is the claimed interface separation property.
\end{proof}

\subsection{Graph-Native BU Library}

The decomposition stage produces block units that can be stored and reused. For a residual component $(C_k,V_k)$, the corresponding block unit is
\begin{equation}
\mathrm{BU}_k =
\big(C_k,V_k,M_k,B_k,\mathcal{A}_k,\theta_k\big),
\end{equation}
where $\theta_k$ stores the local metadata associated with $C_k$ and $V_k$, including right-hand sides, constraint senses, objective coefficients, bounds, and variable types.

The internal slice $A_{C_k,V_k}$ describes the local module, while $A_{M_k,V_k}$ and $A_{C_k,B_k}$ record its interaction with adjacent master constraints and boundary variables. A block unit contains both the local subproblem and the interface terms needed to reinsert it into a full MILP, without carrying unrelated global interface nodes.

\subsection{Compatible Generation}

The extracted block units form a library $\mathcal{L}=\{\mathrm{BU}_1,\ldots,\mathrm{BU}_N\}$. To generate a new instance, \method decomposes a target instance and tries to replace one of its block units with a compatible source unit from $\mathcal{L}$. Compatibility is checked before any coefficient slice is copied. We define the shape signature
\begin{equation}
\begin{aligned}
\Phi(\mathrm{BU}) =
\big(&\mathrm{shape}(A_{C,V}),\\
&\mathrm{shape}(A_{M,V}),\\
&\mathrm{shape}(A_{C,B})\big),
\end{aligned}
\end{equation}
and require $\Phi(\mathrm{BU}_t)=\Phi(\mathrm{BU}_s)$.
The target keeps its variable types and bounds. These fields are not treated as style metadata; changing them can change the feasible region even when the graph shape is unchanged. For a modification ratio $\eta$, experiments set the replacement budget to $q=\lfloor\eta |\mathcal{T}|\rfloor$, where $\mathcal{T}$ is the target block-unit set. The generator does not solve the sufficient condition in Proposition~\ref{prop:feasible_replacement}; it keeps the quantities that make that condition well defined.

Table~\ref{tab:replacement_policy} summarizes the replacement policy. Source and target rows and columns are aligned by the deterministic order stored during extraction: local constraints, local variables, master-interface rows, and boundary-interface columns are sorted by extraction order with stable original-index tie breaks. If any required metadata signature or interface shape is incompatible, the selected target unit is left unchanged.

\begin{table}[t]
\centering
\small
\setlength{\tabcolsep}{4pt}
\renewcommand{\arraystretch}{1.08}
\caption{Compatible replacement policy for a target block unit and a sampled source block unit.}
\label{tab:replacement_policy}
\begin{tabularx}{\columnwidth}{@{}lX@{}}
\toprule
\textbf{Component} & \textbf{Policy} \\
\midrule
$A_{C_k,V_k}$ & copied from source local slice \\
$A_{M_k,V_k}$ & copied from source master-interface slice \\
$A_{C_k,B_k}$ & copied from source boundary-interface slice \\
$b_{C_k}$, $c_{V_k}$ & copied from source after sense check \\
Sense sequence & must match target sequence \\
Variable type, bounds & preserved from target \\
Interface shape & must match exactly \\
Alignment & stable extraction order \\
\bottomrule
\end{tabularx}
\end{table}

\begin{proposition}[Feasible replacement under interface slack]
\label{prop:feasible_replacement}
Write all affected rows in $\le$ form. Let $x$ be a feasible solution of the original MILP, and suppose only block $k$ is replaced while all variables outside $V_k$ are kept fixed. Let $R_k$ be the set of rows that are neither local rows $C_k$ nor affected master rows $M_k$. Assume rows in $R_k$ and the coefficients on outside variables are unchanged. Define the residual master capacity
\begin{equation}
\rho_{M_k}
=
b_{M_k}-A^{old}_{M_k,-V_k}x_{-V_k}.
\end{equation}
If there exists a local assignment $\tilde{x}_{V_k}$ such that
\begin{align}
l_{V_k}\le \tilde{x}_{V_k}\le u_{V_k},\qquad
&\tilde{x}_j\in\mathbb{Z}\quad \forall j\in I\cap V_k, \label{eq:domain_condition}\\
A^{new}_{C_k,V_k}\tilde{x}_{V_k}
+A^{new}_{C_k,B_k}x_{B_k}
&\le b^{new}_{C_k}, \label{eq:local_condition}\\
A^{new}_{M_k,V_k}\tilde{x}_{V_k}
&\le \rho_{M_k}, \label{eq:master_condition}
\end{align}
then $\tilde{x}=(x_{-V_k},\tilde{x}_{V_k})$ is feasible for the replaced MILP.
\end{proposition}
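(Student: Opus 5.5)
The plan is to verify feasibility of $\tilde{x}=(x_{-V_k},\tilde{x}_{V_k})$ constraint by constraint. Every row of the replaced MILP falls into exactly one of three classes: the local rows $C_k$, the affected master rows $M_k$, and the remaining rows $R_k$. Variable bounds and integrality are handled separately. Throughout, we use the replacement policy of Table~\ref{tab:replacement_policy}: variable types and bounds are preserved from the target, so $l$, $u$ and $I$ are the same in the original and replaced MILP.

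\emph{Domain.} For $j\notin V_k$, we have $\tilde{x}_j=x_j$. These values satisfy $l_j\le x_j\le u_j$ and integrality because $x$ is feasible and these fields are unchanged. For $j\in V_k$, the same facts follow from \eqref{eq:domain_condition}.

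\emph{Local rows $C_k$.} By the definition of $\mathrm{BU}_k$, a local row $c\in C_k$ has nonzeros only in $V_k$ and in $B_k$. This follows from the definition of the residual components: once $M\cup B$ is removed, a component $(C_k,V_k)$ has no residual edge to another component, which is Proposition~1. Any edge from $c$ to a boundary variable lies in $B_k$ by definition. Hence the new row $c$ evaluated at $\tilde{x}$ equals the corresponding entry of $A^{new}_{C_k,V_k}\tilde{x}_{V_k}+A^{new}_{C_k,B_k}x_{B_k}$, since $B_k\cap V_k=\emptyset$ gives $\tilde{x}_{B_k}=x_{B_k}$. Condition \eqref{eq:local_condition} then gives the bound $b^{new}_{C_k}$.

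\emph{Master rows $M_k$.} Split each row into its $V_k$ part and its outside part. The outside coefficients are unchanged by assumption, so
\[
A^{new}_{M_k,\cdot}\tilde{x}=A^{new}_{M_k,V_k}\tilde{x}_{V_k}+A^{old}_{M_k,-V_k}x_{-V_k}\le \rho_{M_k}+A^{old}_{M_k,-V_k}x_{-V_k}=b_{M_k},
\]
where the inequality is \eqref{eq:master_condition}. Master rows are outside the replaced slices, so their right-hand sides $b_{M_k}$ are unchanged.

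\emph{Rows $R_k$.} These rows are unchanged by assumption. The step I expect to be the main obstacle is showing that they have no nonzeros on $V_k$, so that their value at $\tilde{x}$ equals their value at $x$, which is feasible.
\begin{itemize}
\item A row of $R_k$ is either a local row of another block, or a master row outside $M_k$.
\item A master row outside $M_k$ has no neighbor in $V_k$ by the definition of $M_k$.
\item For a local row $c\in C_{k'}$ with $k'\ne k$, an edge $(c,v)$ with $v\in V_k$ would be a non-interface cross-block edge. Proposition~1 forbids this at termination, since $c\notin M$ and $v\notin B$.
\end{itemize}
Thus $A_{R_k,V_k}=0$ both before and after replacement, since only the slices in $\mathcal{A}_k$ are overwritten. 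So $A_{R_k,\cdot}\tilde{x}=A_{R_k,\cdot}x\le b_{R_k}$.

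Combining the three row classes with the domain check shows that $\tilde{x}$ is feasible for the replaced MILP.
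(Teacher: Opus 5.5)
Your proposal is correct and follows the paper's proof: the same split of rows into $C_k$, $M_k$ and $R_k$, the same domain check, and the same master-row computation that adds $A^{old}_{M_k,-V_k}x_{-V_k}$ back to $\rho_{M_k}$. You go further than the paper by deriving, from the definitions of $M_k$ and $B_k$ and from interface separation, two support facts it uses without comment: rows of $C_k$ touch only $V_k\cup B_k$, and $A_{R_k,V_k}=0$ (the paper's phrase ``the row coefficients and variables are unchanged'' passes over the fact that $x_{V_k}$ does change).
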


\begin{proof}
The replaced solution satisfies the domain constraints by
\begin{equation}
\tilde{x}_{-V_k}=x_{-V_k},
\qquad
l_{V_k}\le \tilde{x}_{V_k}\le u_{V_k},
\qquad
\tilde{x}_j\in\mathbb{Z}\ \forall j\in I\cap V_k .
\end{equation}
The local rows are feasible directly from Eq.~\eqref{eq:local_condition}:
\begin{equation}
A^{new}_{C_k,:}\tilde{x}
=
A^{new}_{C_k,V_k}\tilde{x}_{V_k}
+A^{new}_{C_k,B_k}x_{B_k}
\le b^{new}_{C_k}.
\end{equation}
For affected master rows, Eq.~\eqref{eq:master_condition} gives
\begin{equation}
\begin{aligned}
A^{new}_{M_k,:}\tilde{x}
&=
A^{new}_{M_k,V_k}\tilde{x}_{V_k}
+A^{old}_{M_k,-V_k}x_{-V_k}\\
&\le
\rho_{M_k}
+A^{old}_{M_k,-V_k}x_{-V_k}
= b_{M_k}.
\end{aligned}
\end{equation}
For all remaining rows $R_k$, the row coefficients and variables are unchanged, hence
\begin{equation}
A^{new}_{R_k,:}\tilde{x}
=
A^{old}_{R_k,:}x
\le b_{R_k}.
\end{equation}
Combining the three row groups,
\begin{equation}
\begin{bmatrix}
A^{new}_{C_k,:}\\
A^{new}_{M_k,:}\\
A^{new}_{R_k,:}
\end{bmatrix}
\tilde{x}
\le
\begin{bmatrix}
b^{new}_{C_k}\\
b_{M_k}\\
b_{R_k}
\end{bmatrix},
\end{equation}
and the domain constraints also hold. Therefore $\tilde{x}$ is feasible for the replaced MILP.
\end{proof}

\begin{algorithm}[t]
\caption{\method block-unit generation}
\label{alg:graphbu}
\small
\begin{algorithmic}[1]
\REQUIRE source set $\mathcal{D}$; target MILP $P$; budget $q$
\ENSURE generated MILP $\hat P$
\STATE $\mathcal{L}\leftarrow\emptyset$
\FOR{each source instance $P_s\in\mathcal{D}$}
    \STATE $\mathcal{L}\leftarrow\mathcal{L}\cup\operatorname{ExtractBU}(P_s)$
\ENDFOR
\STATE $\hat P\leftarrow P$
\STATE $\mathcal{T}\leftarrow\operatorname{ExtractBU}(\hat P)$
\FOR{$r=1,\ldots,q$}
    \STATE sample a target unit $\mathrm{BU}_t\in\mathcal{T}$
    \STATE $\mathcal{C}\leftarrow$ source units compatible with $\mathrm{BU}_t$
    \IF{$\mathcal{C}=\emptyset$}
        \STATE continue
    \ENDIF
    \STATE sample $\mathrm{BU}_s\in\mathcal{C}$
    \STATE replace local and interface slices using Table~\ref{tab:replacement_policy}
    \STATE preserve target variable types, bounds, and interface dimensions
\ENDFOR
\STATE \textbf{return} $\hat P$
\end{algorithmic}
\end{algorithm}

\noindent\textsc{ExtractBU}$(P)$ denotes the same extraction routine for source and target instances. It builds the bipartite graph, selects initial interface nodes from grouped neighborhoods, runs residual decomposition with promotion, and returns block-local units with their adjacent interfaces.

\subsection{Structural Properties}

The next property states the row-column order invariance implied by the graph representation. Let two MILP instances be equivalent if one can be obtained from the other by permuting constraint rows and variable columns.

\begin{proposition}[Permutation invariance]
Suppose two MILP instances differ only by row and column permutations. If the grouping module is equivariant to the induced graph isomorphism up to group-label renaming, then \method produces equivalent interface sets, residual blocks, and block units up to node renaming.
\end{proposition}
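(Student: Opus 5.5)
The plan is to fix the graph isomorphism and show that every stage of \textsc{ExtractBU} commutes with it. Let $P'$ be obtained from $P$ by row permutation $\pi$ and column permutation $\sigma$. As noted in the preliminaries, the bipartite graphs satisfy $G'=\phi(G)$, where $\phi(c_i)=c'_{\pi(i)}$ and $\phi(v_j)=v'_{\sigma(j)}$. The map $\phi$ preserves edge attributes $A_{ij}$ and all node metadata (right-hand sides, senses, objective coefficients, bounds, types). The argument then walks through the pipeline stage by stage and proves, by induction, that each intermediate object for $P'$ is the $\phi$-image of the corresponding object for $P$, up to renaming of group and block labels.

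\textbf{Grouping and scores.} By hypothesis, there is a bijection $\tau_V:\mathcal{R}_V\to\mathcal{R}'_V$ with $g'_V(\phi(v))=\tau_V(g_V(v))$, and similarly a bijection $\tau_C$ for constraint groups. Since $N(\phi(c))=\phi(N(c))$, the neighbor distribution satisfies $p'_{\phi(c)}(\tau_V(r))=p_c(r)$. Span and entropy are symmetric functions of the multiset $\{p_c(r)\}$, and degree is $|N(c)|$, so all three scores are invariant under $\phi$. The lexicographic ranking is therefore preserved. The selected sets then satisfy $M'=\phi(M)$ and $B'=\phi(B)$, provided the selection rule depends only on scores (for example a threshold or a top-fraction rule).

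\textbf{Decomposition and promotion.} The residual graph is an induced subgraph, so $G'_{\mathrm{res}}=\phi(G_{\mathrm{res}})$, and connected components map to connected components. For the graph-cut refinement, I would argue that Stoer--Wagner and balanced spectral refinement depend only on weighted adjacency, which $\phi$ preserves. Their outputs therefore correspond, so the block labelings satisfy $\beta'\circ\phi=\kappa\circ\beta$ for a bijection $\kappa$ of block labels. Under this relation the violated set satisfies $E'_\times=\phi(E_\times)$. The promotion choice uses the invariant score and a constraint-side tie-break, which is also invariant, so each promotion round maps onto the corresponding round for $P'$. Induction on the round index $t$, together with termination (Proposition on interface separation), gives the same fixed point up to $\phi$. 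Finally, $M_k,B_k$ are defined through adjacency and $\mathcal{A}_k$ through entries $A_{ij}$, so $\mathrm{BU}'_{\kappa(k)}$ equals $\phi(\mathrm{BU}_k)$ with $\theta$ transported. This is equivalence up to node renaming, and signatures $\Phi$ coincide because shapes are cardinalities.

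\textbf{Main obstacle: tie-breaking.} Three places may break ties using original indices: equal-score candidates in interface selection, non-unique minimum cuts or degenerate spectral eigenvectors, and the ``stable original-index tie break'' in alignment. Index-based tie-breaks are not permutation-equivariant, so exact equality can fail on symmetric instances. I would handle this in two steps. First, interpret the claim as holding whenever all choices are determined by isomorphism-invariant quantities, which amounts to assuming equivariant (or canonically labeled) cut and tie-break routines, in parallel with the hypothesis placed on the grouping module. Second, note that when ties arise only among nodes related by an automorphism of $G$, the two outcomes differ by that automorphism composed with $\phi$, which is still a node renaming. I expect this second observation to hold for selection ties but not in general for cut ties. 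The proof will therefore state the equivariance of the refinement routine as part of the standing assumption, and remark that the row-order alignment only fixes coordinates inside a unit and does not change the unit up to renaming.
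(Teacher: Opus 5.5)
Your proposal is correct and follows the paper's proof essentially step for step. Equivariant grouping gives $p'_{\phi(c)}(\tau_V(r))=p_c(r)$, hence invariant span/entropy/degree scores; the ranking then selects $M'=\phi(M)$ and $B'=\phi(B)$; residual components and refinements correspond under $\phi$; promotion commutes with $\phi$; and the block units are transported by the same renaming.

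The equivariance of the cut and tie-break routines, which you state as a standing assumption, is exactly what the paper uses tacitly when it asserts that the ranking ``selects corresponding interfaces'' and that graph-cut refinements ``correspond under $\phi$''. Rest the argument on that assumption rather than on your automorphism fallback. That fallback does not apply in general, because equal score tuples need not come from automorphic nodes. It also fails for automorphic ties once more than one tied node is selected, since two tied subsets need not be related by a single automorphism.
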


\begin{proof}
Let $\phi=(\phi_C,\phi_V)$ be the graph isomorphism induced by the row
and column permutations, and let $\rho_C,\rho_V$ be the corresponding
renamings of constraint and variable group labels. Equivariance of the
grouping module means
\begin{equation}
g'_C(\phi_C(c))=\rho_C(g_C(c)),
\qquad
g'_V(\phi_V(v))=\rho_V(g_V(v)).
\end{equation}
For any constraint $c$, the neighbor distribution is therefore preserved
up to label renaming:
\begin{equation}
p'_{\phi_C(c)}(\rho_V(r)) = p_c(r).
\end{equation}
Thus
\begin{equation}
(\mathrm{span}',\mathrm{ent}',\mathrm{deg}')(\phi_C(c))
=
(\mathrm{span},\mathrm{ent},\mathrm{deg})(c),
\end{equation}
and the same argument holds symmetrically for variables. The
lexicographic ranking consequently selects corresponding interfaces,
$M'=\phi_C(M)$ and $B'=\phi_V(B)$. Removing these nodes gives
\begin{equation}
G'_{\mathrm{res}}
=
\phi(G_{\mathrm{res}}),
\end{equation}
so connected components and graph-cut refinements correspond under
$\phi$. The promotion rule depends only on adjacency and block labels,
which are preserved by $\phi$, and therefore also commutes with the
permutation. Finally, block-unit extraction applies the same node
renaming to the internal and interface slices. Hence the interface
sets, residual blocks, and block units are equivalent up to node
renaming.
\end{proof}

These propositions give structural guarantees without asserting a unique semantic decomposition or universal hardness preservation. Cross-block coupling is represented through explicit interfaces. Interface-aware replacement keeps the quantities needed by the sufficient feasibility condition. Under equivariant grouping, block discovery is independent of arbitrary row-column order.

\section{Experiments}

\begin{table*}[!t]
\centering
\scriptsize
\setlength{\tabcolsep}{2.7pt}
\renewcommand{\arraystretch}{1.02}
\caption{
Generation quality at $\eta=0.05$. Each dataset reports graph-statistical similarity, feasible ratio, and average solve time in seconds. Higher similarity and feasible ratio are better. Solve time should be read only when the generated distribution remains faithful and feasible.
}
\label{tab:generation_quality}
\resizebox{\textwidth}{!}{
\begin{tabular}{lccc ccc ccc ccc}
\toprule
\multirow{2}{*}{\textbf{Method}}
& \multicolumn{3}{c}{\textbf{CA}}
& \multicolumn{3}{c}{\textbf{FA}}
& \multicolumn{3}{c}{\textbf{IP}}
& \multicolumn{3}{c}{\textbf{WA}} \\
\cmidrule(lr){2-4}\cmidrule(lr){5-7}\cmidrule(lr){8-10}\cmidrule(lr){11-13}
& Similarity $\uparrow$ & Feas. $\uparrow$ & Time
& Similarity $\uparrow$ & Feas. $\uparrow$ & Time
& Similarity $\uparrow$ & Feas. $\uparrow$ & Time
& Similarity $\uparrow$ & Feas. $\uparrow$ & Time \\
\midrule
Random
& 0.593 & 100.0\% & 1000.00
& 0.000 & 0.0\% & 0.03
& 0.248 & 98.0\% & 0.15
& 0.040 & 0.0\% & 0.15 \\
G2MILP
& 0.484 & 100.0\% & 0.69
& 0.092 & 1.8\% & 0.01
& 0.352 & 100.0\% & 0.14
& 0.484 & 0.0\% & 0.01 \\
MILP-StuDio
& 0.936 & 100.0\% & 1000.00
& 0.723 & 100.0\% & 2.07
& 0.687 & 68.0\% & 340.01
& 0.910 & 92.0\% & 270.39 \\
\rowcolor{ourhighlight}
\textbf{\method}
& \textbf{0.989} & 100.0\% & 1000.00
& \textbf{0.813} & 100.0\% & 2.78
& \textbf{0.940} & 88.0\% & 715.37
& \textbf{0.995} & \textbf{98.7\%} & 871.51 \\
\bottomrule
\end{tabular}
}
\end{table*}

\subsection{Experimental Setup}

We evaluate whether the generated instances remain usable as data from the target MILP family.

\begin{itemize}
\item \textbf{Datasets.} We use four MILP families: combinatorial auctions (CA), capacitated facility location (FA), item placement (IP), and workload appointment (WA). Appendix Table~\ref{tab:app_dataset_scale} reports their original scales.
\item \textbf{Metrics.} Generation quality is measured by graph-statistical similarity, feasible ratio, and average Gurobi solve time. For a statistic $z$, let $d_z\in[0,1]$ be its normalized distributional distance after clipping. We report
\begin{equation}
\mathrm{Similarity}=|\mathcal{Z}|^{-1}\sum_{z\in\mathcal{Z}}(1-d_z),
\end{equation}
so larger values mean closer agreement with the original family under size, sparsity, degree, coefficient, clustering, and modularity statistics. Downstream utility is measured by Predict-and-Search (PS) performance on held-out original instances.
\item \textbf{Baselines.} We compare against random replacement, G2MILP, and MILP-StuDio. These cover unstructured replacement, learned graph-level generation, and block generation from reordered coefficient matrices.
\item \textbf{Experimental details.} We evaluate modification ratios $\eta\in\{0.01,0.05,0.10\}$ and use a 1000-second Gurobi validation limit for all methods. Each generation-quality setting contains 100 CA, 50 FA, 100 IP, and 150 WA generated instances. Downstream PS models are trained on generated instances and evaluated on 40 held-out original instances per family; held-out instances are never used as generation sources or targets.
\end{itemize}

\subsection{Overall Performance}

\paragraph{Similarity and hardness.}
Table~\ref{tab:generation_quality} reports generation quality at $\eta=0.05$. Similarity averages $1-d_z$ over the structural statistics $z$, where $d_z$ is a clipped normalized distance between generated and original distributions. A larger value means less drift from the target family. \method has the highest similarity on all four datasets: 0.989 on CA, 0.813 on FA, 0.940 on IP, and 0.995 on WA. Feasibility is also stable on most families. CA and FA remain fully feasible, WA reaches 98.7\%, and IP improves over the block-structured baseline (88.0\% versus 68.0\%). Solve time is reported as a hardness signal, but it should not be read alone. A fast infeasible or low-similarity instance is usually a degenerate sample, not an easy case worth training on.

\begin{figure}[!tbp]
\centering
\includegraphics[width=\columnwidth]{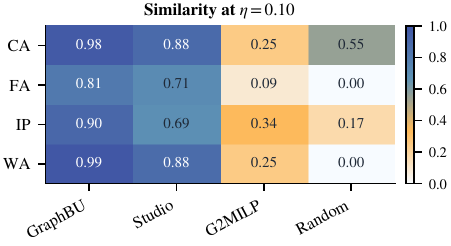}
\caption{Graph-statistical similarity at $\eta=0.10$. Blue cells indicate stronger agreement with the original family.}
\label{fig:similarity_heatmap}
\end{figure}

Figure~\ref{fig:similarity_heatmap} repeats the comparison at $\eta=0.10$. \method is still closest to the original distribution on all four datasets. WA is the clearest case: the decomposition finds many replaceable local units with compatible interfaces, so the replacement changes local slices without moving the global graph statistics much. The generated instances are not copies, since degree and coefficient moments still shift after replacement. Similarity matters here because learned solver policies depend on staying in the same structural regime. We still report feasibility, solve time, and PS performance because distributional closeness alone is too weak.

\paragraph{Downstream Predict-and-Search.}
Table~\ref{tab:ps_main} tests the generated data in a downstream PS setting. CA, IP, and WA hit the time limit, so we compare final primal-dual gaps. FA is solved by all methods, so runtime is the relevant metric. Relative to the PS baseline, PS+\method reduces the CA gap from 7.8165 to 7.7394, the IP gap from 74.8606 to 73.2195, and the WA gap from 0.4900 to 0.4724. On FA, the average runtime drops from 4.82s to 3.61s while all 40 instances remain solved. The gains are not large on every family, but they appear exactly where the generated data remains structurally close to the source family.

\begin{table*}[!tbp]
\centering
\small
\setlength{\tabcolsep}{4.2pt}
\renewcommand{\arraystretch}{1.06}
\caption{
Downstream Predict-and-Search performance on original test instances. CA, IP, and WA use gap reduction as gain because all methods hit the time limit. FA uses runtime reduction because all methods solve all instances. Objective values follow the direction of each dataset; gap/time and gain are the primary comparison metrics. Best PS-variant results are in bold.
}
\label{tab:ps_main}
\resizebox{\textwidth}{!}{
\begin{tabular}{lccc ccc ccc ccc}
\toprule
\multirow{2}{*}{\textbf{Method}}
& \multicolumn{3}{c}{\textbf{CA} \;(\textit{timeout: 40/40})}
& \multicolumn{3}{c}{\textbf{FA} \;(\textit{optimal: 40/40})}
& \multicolumn{3}{c}{\textbf{IP} \;(\textit{timeout: 40/40})}
& \multicolumn{3}{c}{\textbf{WA} \;(\textit{timeout: 40/40})} \\
\cmidrule(lr){2-4}\cmidrule(lr){5-7}\cmidrule(lr){8-10}\cmidrule(lr){11-13}
& \textbf{Obj.} $\uparrow$ & \textbf{Gap} $\downarrow$ & \textbf{Gain} $\uparrow$
& \textbf{Time} $\downarrow$ & \textbf{Gap} $\downarrow$ & \textbf{Gain} $\uparrow$
& \textbf{Obj.} $\downarrow$ & \textbf{Gap} $\downarrow$ & \textbf{Gain} $\uparrow$
& \textbf{Obj.} $\downarrow$ & \textbf{Gap} $\downarrow$ & \textbf{Gain} $\uparrow$ \\
\midrule
Gurobi baseline
& 97416.6760 & 7.9177 & --
& 5.13 & 0.00173 & --
& 11.1934 & 61.1536 & --
& 699.300 & 0.4864 & -- \\
PS baseline
& 97467.1819 & 7.8165 & 0.00\%
& 4.82 & \textbf{0.00144} & 0.00\%
& 10.9523 & 74.8606 & 0.00\%
& 699.275 & 0.4900 & 0.00\% \\
PS+MILP-StuDio
& 97384.0230 & 8.0029 & -2.38\%
& 5.02 & 0.00167 & -4.15\%
& 11.0653 & 73.9103 & 1.27\%
& 699.325 & 0.4905 & -0.10\% \\
\rowcolor{ourhighlight}
\textbf{PS+\method}
& \textbf{97529.0606} & \textbf{7.7394} & \textbf{0.99\%}
& \textbf{3.61} & \textbf{0.00144} & \textbf{25.10\%}
& \textbf{10.7780} & \textbf{73.2195} & \textbf{2.19\%}
& \textbf{699.250} & \textbf{0.4724} & \textbf{3.59\%} \\
\bottomrule
\end{tabular}
}
\end{table*}

\begin{figure}[!tbp]
\centering
\includegraphics[width=\columnwidth]{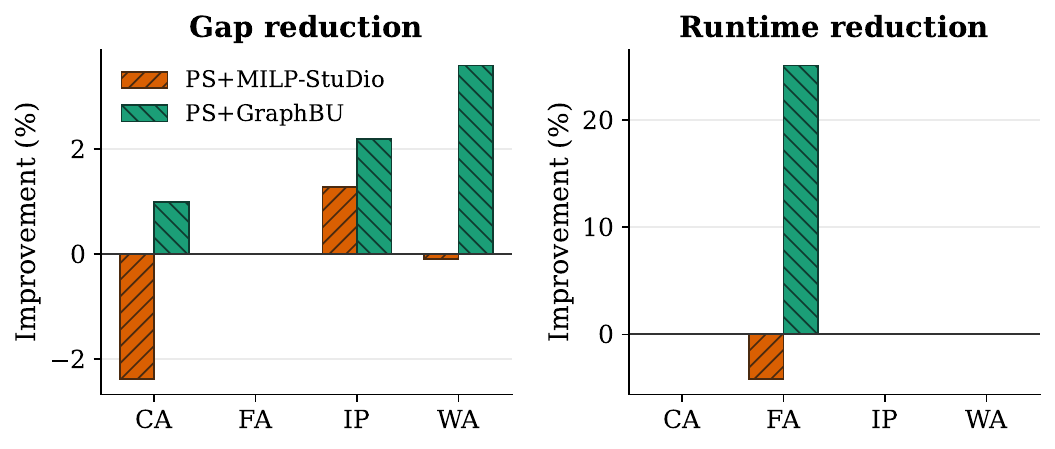}
\caption{Relative improvement over PS baseline. CA/IP/WA use gap reduction; FA uses runtime reduction. Non-primary dataset-metric pairs are shown as zero.}
\label{fig:ps_improvement}
\end{figure}

\subsection{Block-Pool Statistics}

Figure~\ref{fig:block_pool_diagnostics} explains when compatible replacement has room to operate. The x-axis counts extracted block units on a log scale. Compatibility measures how often a target unit can find a source unit with the same shape and metadata. Bubble size shows whether one block dominates the library, and edge width records how much coupling is exposed as interfaces. The datasets behave differently. IP has a small but highly compatible pool, which helps explain its improvement over the block-structured baseline. WA has many reusable units and enough matching interfaces; this is consistent with its high similarity. CA has fewer compatible matches, so the conservative replacement rule rejects more substitutions.

\begin{figure}[!tbp]
\centering
\includegraphics[width=\columnwidth]{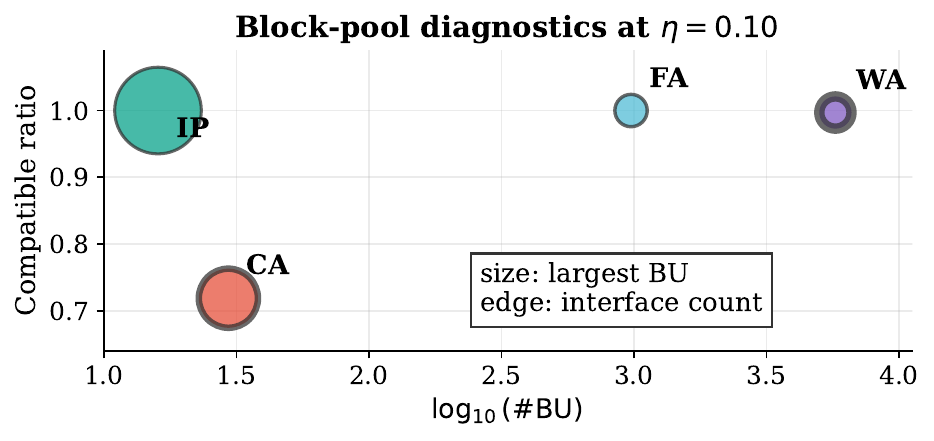}
\caption{Block-pool statistics at $\eta=0.10$. The plot links the decomposition output to compatible generation: reusable block units matter when their interface signatures admit replacement.}
\label{fig:block_pool_diagnostics}
\end{figure}

\section{Conclusion}

\method is a MILP instance generator built around graph-native block units. Each unit contains a local module and the interface through which it connects to the rest of the instance. Generation is not tied to a fixed row-column ordering, and replacement depends on explicit compatibility checks.

The structural results are intentionally modest. Promotion makes cross-block coupling explicit. The replacement rule keeps the quantities needed by a sufficient feasibility condition. The graph representation removes dependence on row-column order under equivariant grouping. In experiments, \method stays close to the source families in graph-statistical similarity and preserves feasibility on most datasets. The downstream PS results suggest that this structure matters for solver training, beyond matching summary statistics.

\bibliography{references}

@article{miplib2017,
  title = {{MIPLIB} 2017: Data-Driven Compilation of the 6th Mixed-Integer Programming Library},
  author = {Gleixner, Ambros and Hendel, Gregor and Gamrath, Gerald and Achterberg, Tobias and Bastubbe, Michael and Berthold, Timo and Christophel, Philipp and Jarck, Kati and Koch, Thorsten and Linderoth, Jeff and others},
  journal = {Mathematical Programming Computation},
  volume = {13},
  number = {3},
  pages = {443--490},
  year = {2021},
  publisher = {Springer}
}

@article{gasse2022ml4co,
  title = {The Machine Learning for Combinatorial Optimization Competition ({ML4CO}): Results and Insights},
  author = {Gasse, Maxime and Cappart, Quentin and Charfreitag, Jonas and Charlin, Laurent and Ch{\'e}telat, Didier and Chmiela, Antonia and Dumouchelle, Justin and Gleixner, Ambros M. and Kazachkov, Aleksandr M. and Khalil, Elias B. and Lichocki, Pawel and Lodi, Andrea and Lubin, Miles and Maddison, Chris J. and Morris, Christopher and Papageorgiou, Dimitri J. and Parjadis, Augustin and Pokutta, Sebastian and Prouvost, Antoine and Scavuzzo, Lara and Zarpellon, Giulia and Yang, Linxin and Lai, Sha and Wang, Akang and Luo, Xiaodong and Zhou, Xiang and Huang, Haohan and Shao, Sheng Cheng and Zhu, Yuanming and Zhang, Dong and Quan, Tao and Cao, Zixuan and Xu, Yang and Huang, Zhewei and Zhou, Shuchang and Chen, Binbin and He, Minggui and Hao, Hao and Zhang, Zhiyu and An, Zhiwu and Mao, Kun},
  journal = {arXiv preprint arXiv:2203.02433},
  year = {2022}
}

@article{gasse2019exact,
  title = {Exact Combinatorial Optimization with Graph Convolutional Neural Networks},
  author = {Gasse, Maxime and Ch{\'e}telat, Didier and Ferroni, Nicola and Charlin, Laurent and Lodi, Andrea},
  journal = {arXiv preprint arXiv:1906.01629},
  year = {2019}
}

@article{prouvost2020ecole,
  title = {Ecole: A Gym-Like Library for Machine Learning in Combinatorial Optimization Solvers},
  author = {Prouvost, Antoine and Dumouchelle, Justin and Scavuzzo, Lara and Gasse, Maxime and Ch{\'e}telat, Didier and Lodi, Andrea},
  journal = {arXiv preprint arXiv:2011.06069},
  year = {2020}
}

@article{bengio2021mlco,
  title = {Machine Learning for Combinatorial Optimization: A Methodological Tour d'Horizon},
  author = {Bengio, Yoshua and Lodi, Andrea and Prouvost, Antoine},
  journal = {European Journal of Operational Research},
  volume = {290},
  number = {2},
  pages = {405--421},
  year = {2021}
}

@inproceedings{khalil2016learning,
  title = {Learning to Branch in Mixed Integer Programming},
  author = {Khalil, Elias Boutros and Le Bodic, Pierre and Song, Le and Nemhauser, George L. and Dilkina, Bistra},
  booktitle = {Proceedings of the AAAI Conference on Artificial Intelligence},
  year = {2016}
}

@article{nair2020neuralmip,
  title = {Solving Mixed Integer Programs Using Neural Networks},
  author = {Nair, Vinod and Bartunov, Sergey and Gimeno, Felix and von Glehn, Tamara and Lichocki, Pawel and Lobov, Ivan and O'Donoghue, Brendan and Sonnerat, Nicolas and Tjandraatmadja, Christian and Wang, Pengming and Addanki, Ravichandra and Hapuarachchi, Theophane and Keck, Thomas and Keeling, James and Kohli, Pushmeet and Ktena, Ira and Li, Yujia and Vinyals, Oriol and Zwols, Yori},
  journal = {arXiv preprint arXiv:2012.13349},
  year = {2020}
}

@inproceedings{han2023predictsearch,
  title = {A {GNN}-Guided Predict-and-Search Framework for Mixed-Integer Linear Programming},
  author = {Han, Qingyu and Yang, Linxin and Chen, Qian and Zhou, Xiang and Zhang, Dong and Wang, Akang and Sun, Ruoyu and Luo, Xiaodong},
  booktitle = {International Conference on Learning Representations},
  year = {2023}
}

@phdthesis{bowly2019stress,
  title = {Stress Testing Mixed Integer Programming Solvers through New Test Instance Generation Methods},
  author = {Bowly, Simon},
  school = {Monash University},
  year = {2019}
}

@inproceedings{geng2023g2milp,
  title = {A Deep Instance Generative Framework for {MILP} Solvers Under Limited Data Availability},
  author = {Geng, Zijie and Li, Xijun and Wang, Jie and Li, Xiao and Zhang, Yongdong and Wu, Feng},
  booktitle = {Advances in Neural Information Processing Systems},
  year = {2023}
}

@inproceedings{liu2024milpstudio,
  title = {{MILP-StuDio}: {MILP} Instance Generation via Block Structure Decomposition},
  author = {Geng, Zijie and Kuang, Yufei and Li, Bin and Li, Xijun and Liu, Haoyang and Wang, Jie and Wu, Feng and Zhang, Wanbo and Zhang, Yongdong},
  booktitle = {Advances in Neural Information Processing Systems},
  year = {2024}
}

@article{dantzig1960decomposition,
  title = {Decomposition Principle for Linear Programs},
  author = {Dantzig, George B. and Wolfe, Philip},
  journal = {Operations Research},
  volume = {8},
  number = {1},
  pages = {101--111},
  year = {1960}
}

@article{benders1962partitioning,
  title = {Partitioning Procedures for Solving Mixed-Variables Programming Problems},
  author = {Benders, Jacques F.},
  journal = {Computational Management Science},
  volume = {2},
  number = {1},
  pages = {3--19},
  year = {2005}
}

@article{geoffrion1972generalized,
  title = {Generalized Benders Decomposition},
  author = {Geoffrion, Arthur M.},
  journal = {Journal of Optimization Theory and Applications},
  volume = {10},
  number = {4},
  pages = {237--260},
  year = {1972}
}

@incollection{vanderbeck2010reformulation,
  title = {Reformulation and Decomposition of Integer Programs},
  author = {Vanderbeck, Fran{\c{c}}ois and Wolsey, Laurence A.},
  booktitle = {50 Years of Integer Programming 1958--2008},
  pages = {431--502},
  year = {2010},
  publisher = {Springer}
}

\appendix
\clearpage
\begingroup
\setlength{\textfloatsep}{7pt plus 1pt minus 2pt}
\setlength{\floatsep}{6pt plus 1pt minus 2pt}
\setlength{\intextsep}{6pt plus 1pt minus 2pt}
\captionsetup[table]{skip=3pt}
\makeatletter
\setlength{\@fptop}{0pt}
\setlength{\@fpsep}{8pt plus 1pt minus 1pt}
\setlength{\@fpbot}{0pt plus 1fil}
\setlength{\@dblfptop}{0pt}
\setlength{\@dblfpsep}{8pt plus 1pt minus 1pt}
\setlength{\@dblfpbot}{0pt plus 1fil}
\makeatother

\section{Additional Method Details}

\subsection{Graph Construction and Metadata}

For each MILP instance, \method constructs a weighted bipartite graph
$G=(C\cup V,E)$ from the nonzero pattern of the constraint matrix. The
constraint side $C$ contains one node per row of $A$, and the variable
side $V$ contains one node per decision variable. Each nonzero
$A_{ij}$ induces an edge $(c_i,v_j)$ with edge weight $A_{ij}$.
The graph topology is used for decomposition, while MILP metadata is
stored with the corresponding nodes and edges. Constraint metadata
includes the right-hand side and constraint sense
$(\le,=,\ge)$. Variable metadata includes objective coefficient, lower
bound, upper bound, and variable type. Edge metadata stores the
coefficient value.

This separation matters during generation. The adjacency pattern
determines which constraints and variables are coupled, while metadata
controls feasibility-critical details that should not be changed
without compatibility checks. In particular, \method preserves variable types and
bounds during compatible replacement, since changing integrality or
domain information can alter the feasible region even when graph shape
is unchanged.

\subsection{Symmetric Interface Scores}

The main text defines span, entropy, and degree scores for constraint
nodes. Boundary variables are scored symmetrically. Let $N(v_j)$ be the
neighboring constraints of variable $v_j$, and let $g_C:C\rightarrow
\mathcal{R}_C$ denote constraint group labels. Define
\begin{equation}
q_j(r)=
\frac{
|\{c_i\in N(v_j):g_C(c_i)=r\}|
}{
|N(v_j)|
}.
\end{equation}
The variable-side coupling scores are
\begin{align}
\mathrm{span}(v_j) &= |\{r:q_j(r)>0\}|,\\
\mathrm{ent}(v_j) &=
-\frac{1}{\log \mathrm{span}(v_j)}
\sum_{r:q_j(r)>0}q_j(r)\log q_j(r),\\
\mathrm{deg}(v_j) &= |N(v_j)|.
\end{align}
When $\mathrm{span}(v_j)=1$, we set $\mathrm{ent}(v_j)=0$. Variables
with high span, entropy, and degree are candidate boundary variables
because they connect multiple constraint groups.

\subsection{Grouping Module}

\method only assumes an initial grouping of constraint and variable
nodes. This grouping can be instantiated by
spectral biclustering on the sparse coefficient structure or by graph
embeddings followed by clustering. The downstream block-unit extraction
does not require the grouping labels to correspond to semantic classes:
they are used to expose nodes whose neighborhoods span multiple groups.
The reported experiments use graph embeddings clustered by HDBSCAN.
Spectral biclustering provides an alternative grouping procedure.
The interface detector is separated from a particular grouping
method: different embeddings or domain-aware clustering schemes can be
used without changing the definition of graph-native block units or the
compatible replacement rule.

\subsection{Compatibility Checks}

For a target block unit $\mathrm{BU}_t$ and a source block unit
$\mathrm{BU}_s$, the main text requires matching internal shape and
block-local interface dimensions. The replacement rule also checks
metadata compatibility before replacement. In particular, the local
constraint-sense sequence must match, and the variable type sequence and
bound pattern of the target local variables are preserved. The source
unit contributes local coefficient slices, local right-hand sides, and
local objective coefficients only when these checks pass. If no
compatible source exists for a selected target block, that target block
is left unchanged and another target block is considered.

These checks are intentionally conservative. They do not certify
feasibility, but they rule out several avoidable failures: replacing a
binary block with a continuous block, changing the number of local
constraints incident to the block-local master interface, changing the
number of boundary-variable columns seen by a local constraint, or
mixing local rows with incompatible constraint senses.

\section{Additional Structural Discussion}

\subsection{Relation to Classical Decomposition}

Classical decomposition methods such as Dantzig-Wolfe decomposition use
local subproblems and global coupling constraints to solve optimization
problems. \method uses the same vocabulary for a different purpose:
generation rather than optimization. A graph-native block unit is not a
pricing problem, and the master constraints in \method are not used to
derive a decomposition algorithm for solving. They are interface nodes
that record how a local module connects to the rest of the MILP.

This distinction limits the claims. \method does
not claim to recover a unique economic, physical, or semantic
decomposition of an instance. It claims that, after interface promotion,
cross-block coupling is represented explicitly and can be respected
during generation.

\subsection{Expanded Proof of Interface Separation}

The main paper states that, after promotion, every cross-block edge in
the original graph is incident to an interface node. We restate the
argument with the block assignment made explicit.

Let $\pi_C^{(t)}(c_i)$ and $\pi_V^{(t)}(v_j)$ denote residual block
assignments at iteration $t$, and define
\begin{equation}
I^{(t)}=M^{(t)}\cup B^{(t)} .
\end{equation}
At each promotion iteration,
\begin{equation}
\begin{aligned}
E_{\times}^{(t)}=
\{(c_i,v_j)\in E:&\pi_C^{(t)}(c_i)\ne \pi_V^{(t)}(v_j),\\
&c_i\notin I^{(t)},\ v_j\notin I^{(t)}\}.
\end{aligned}
\end{equation}
For every violated edge, the update satisfies
\begin{equation}
\forall (c_i,v_j)\in E_{\times}^{(t)},\qquad
\{c_i,v_j\}\cap I^{(t+1)}\ne\emptyset ,
\end{equation}
with
\begin{equation}
I^{(t)}\subseteq I^{(t+1)}\subseteq C\cup V .
\end{equation}
Since $C\cup V$ is finite, the monotone sequence reaches a fixed point
$I^{(*)}$. The stopping condition is
\begin{equation}
E_{\times}^{(*)}=\emptyset .
\end{equation}
Equivalently,
\begin{equation}
\begin{aligned}
\forall (c_i,v_j)\in E,\quad
\pi_C^{(*)}(c_i)\ne \pi_V^{(*)}(v_j)
\Rightarrow\quad
\{c_i,v_j\}\cap I^{(*)}\ne\emptyset .
\end{aligned}
\end{equation}

\subsection{A Sufficient Condition for Feasibility Preservation}

This section gives a sufficient, not necessary, condition under which
replacing one block unit preserves feasibility. It explains the role of
interface-aware compatibility while avoiding a universal
feasibility claim.

For compactness, write the relevant local and master constraints in
$\le$ form; equality constraints can be represented by two inequalities,
and $\ge$ rows can be multiplied by $-1$. Consider a feasible original
solution $x$ for a MILP. Suppose block $k$ with local variables $V_k$ is
replaced, while all variables outside $V_k$ are kept fixed. Let $B_k$ be
the block-local boundary variables and $M_k$ be the block-local master
constraints. If there exists a replacement local assignment
$\tilde{x}_{V_k}$ satisfying the preserved domain constraints
\begin{equation}
l_{V_k}\le \tilde{x}_{V_k}\le u_{V_k},
\qquad
\tilde{x}_j\in\mathbb{Z}\quad \forall j\in I\cap V_k,
\end{equation}
such that
\begin{equation}
A^{new}_{C_k,V_k}\tilde{x}_{V_k}
+ A^{new}_{C_k,B_k}x_{B_k}
\le b^{new}_{C_k},
\end{equation}
and its contribution to each master constraint does not exceed the
original contribution plus remaining master slack,
\begin{equation}
A^{new}_{M_k,V_k}\tilde{x}_{V_k}
\le
A^{old}_{M_k,V_k}x_{V_k}
+ s_{M_k},
\end{equation}
where
\begin{equation}
s_{M_k} = b_{M_k} - A^{old}_{M_k,:}x,
\end{equation}
then the concatenated solution
$\tilde{x}=(x_{-V_k},\tilde{x}_{V_k})$ is feasible for the replaced
instance.

\begin{proof}
The domain condition gives
\begin{equation}
\tilde{x}\in [l,u],\qquad
\tilde{x}_j\in\mathbb{Z}\quad \forall j\in I .
\end{equation}
For local rows in $C_k$,
\begin{equation}
A^{new}_{C_k,V_k}\tilde{x}_{V_k}
+ A^{new}_{C_k,B_k}x_{B_k}
\le b^{new}_{C_k}.
\end{equation}
For master rows in $M_k$,
\begin{equation}
\begin{aligned}
A^{new}_{M_k,:}\tilde{x}
&=
A^{new}_{M_k,V_k}\tilde{x}_{V_k}
+ A^{old}_{M_k,-V_k}x_{-V_k}\\
&\le
A^{old}_{M_k,V_k}x_{V_k}
+ s_{M_k}
+ A^{old}_{M_k,-V_k}x_{-V_k}\\
&=
A^{old}_{M_k,:}x
+ b_{M_k}
- A^{old}_{M_k,:}x
= b_{M_k}.
\end{aligned}
\end{equation}
For all rows outside $C_k\cup M_k$,
\begin{equation}
A^{new}_{r,:}\tilde{x}=A^{old}_{r,:}x\le b_r .
\end{equation}
Hence every constraint and domain restriction is satisfied by
$\tilde{x}$.
\end{proof}

\section{Additional Experimental Protocol}

\subsection{Datasets and Evaluation Regimes}

The experiments use four MILP families: combinatorial auctions (CA),
capacitated facility location (FA), item placement (IP), and workload
appointment (WA). CA, IP, and WA are time-limit regimes under the
1000-second evaluation budget, so downstream Predict-and-Search (PS)
utility is compared by final primal-dual gap. FA instances are solved
by all PS variants, so runtime reduction is the relevant comparison.

For generation quality, all methods are evaluated at modification
ratios $\eta\in\{0.01,0.05,0.10\}$. The main paper reports the strongest
setting, $\eta=0.10$, while Tables~\ref{tab:app_similarity} and
\ref{tab:app_solve} report the per-ratio results.

Table~\ref{tab:app_dataset_scale} reports the scale of the original
instance collections used in the experiments. The families differ
substantially in both size and sparsity: IP is small but comparatively
dense, FA and WA are much larger and very sparse, and CA lies between
these regimes.

\section{Additional Generation Results}

\begin{table*}[!t]
\centering
\small
\setlength{\tabcolsep}{5pt}
\renewcommand{\arraystretch}{0.98}
\caption{Original dataset scale. Rows, variables, nonzeros, and density are averaged over the original instances; parentheses give the observed min--max range when it varies.}
\label{tab:app_dataset_scale}
\begin{tabular}{@{}lrrrr@{}}
\toprule
\textbf{Dataset} & \textbf{\#Instances} & \textbf{Rows} & \textbf{Variables} & \textbf{Nonzeros / Density} \\
\midrule
CA & 100 & 2603.03 (2505--2685) & 1500.00 & 8330.28 (7745--8959) / 0.00213 \\
FA & 50 & 10201.00 & 10100.00 & 40200.00 / 0.000390 \\
IP & 100 & 195.00 & 1083.00 & 7440.00 / 0.03523 \\
WA & 150 & 64302.17 (64163--64428) & 61000.00 & 361465.11 (346186--375920) / 0.0000922 \\
\bottomrule
\end{tabular}

\vspace{6pt}
\small
\setlength{\tabcolsep}{7pt}
\renewcommand{\arraystretch}{0.98}
\caption{Graph-statistical similarity across modification ratios. Higher is better.}
\label{tab:app_similarity}
\begin{tabular}{@{}llcccc@{}}
\toprule
$\boldsymbol{\eta}$ & \textbf{Method} & \textbf{CA} & \textbf{FA} & \textbf{IP} & \textbf{WA} \\
\midrule
0.01 & Random & 0.857 & 0.000 & 0.419 & 0.163 \\
0.01 & G2MILP & 0.854 & 0.358 & 0.337 & 0.854 \\
0.01 & MILP-StuDio & 0.955 & \textbf{0.823} & 0.687 & 0.937 \\
0.01 & \method & \textbf{0.988} & 0.813 & \textbf{0.972} & \textbf{0.999} \\
\midrule
0.05 & Random & 0.593 & 0.000 & 0.248 & 0.040 \\
0.05 & G2MILP & 0.484 & 0.092 & 0.352 & 0.484 \\
0.05 & MILP-StuDio & 0.936 & 0.723 & 0.687 & 0.910 \\
0.05 & \method & \textbf{0.989} & \textbf{0.813} & \textbf{0.940} & \textbf{0.995} \\
\midrule
0.10 & Random & 0.549 & 0.000 & 0.169 & 0.001 \\
0.10 & G2MILP & 0.249 & 0.091 & 0.336 & 0.249 \\
0.10 & MILP-StuDio & 0.884 & 0.707 & 0.687 & 0.878 \\
0.10 & \method & \textbf{0.976} & \textbf{0.806} & \textbf{0.900} & \textbf{0.992} \\
\bottomrule
\end{tabular}

\vspace{6pt}
\footnotesize
\setlength{\tabcolsep}{3.5pt}
\renewcommand{\arraystretch}{0.96}
\caption{Average solve time and feasible ratio across modification ratios. Each cell reports time in seconds followed by feasible ratio in parentheses.}
\label{tab:app_solve}
\begin{tabular}{@{}llcccc@{}}
\toprule
$\boldsymbol{\eta}$ & \textbf{Method} & \textbf{CA} & \textbf{FA} & \textbf{IP} & \textbf{WA} \\
\midrule
0.01 & Original & 1000.00 (100.0\%) & 3.33 (100.0\%) & 1000.00 (100.0\%) & 940.21 (100.0\%) \\
0.01 & Random & 1000.00 (100.0\%) & 0.01 (0.0\%) & 417.48 (98.0\%) & 0.11 (0.0\%) \\
0.01 & G2MILP & 802.30 (100.0\%) & 0.02 (1.7\%) & 456.19 (49.0\%) & 12.01 (10.0\%) \\
0.01 & MILP-StuDio & 1000.00 (100.0\%) & 2.50 (100.0\%) & 340.02 (68.0\%) & 658.62 (98.0\%) \\
0.01 & \method & 1000.00 (100.0\%) & 3.00 (100.0\%) & 851.38 (89.0\%) & 946.48 (100.0\%) \\
\midrule
0.05 & Original & 1000.00 (100.0\%) & 3.20 (100.0\%) & 1000.00 (100.0\%) & 947.43 (100.0\%) \\
0.05 & Random & 1000.00 (100.0\%) & 0.03 (0.0\%) & 0.15 (98.0\%) & 0.15 (0.0\%) \\
0.05 & G2MILP & 0.69 (100.0\%) & 0.01 (1.8\%) & 0.14 (100.0\%) & 0.01 (0.0\%) \\
0.05 & MILP-StuDio & 1000.00 (100.0\%) & 2.07 (100.0\%) & 340.01 (68.0\%) & 270.39 (92.0\%) \\
0.05 & \method & 1000.00 (100.0\%) & 2.78 (100.0\%) & 715.37 (88.0\%) & 871.51 (98.7\%) \\
\midrule
0.10 & Original & 1000.00 (100.0\%) & 4.05 (100.0\%) & 1000.00 (100.0\%) & 940.11 (100.0\%) \\
0.10 & Random & 1000.00 (100.0\%) & 0.04 (0.0\%) & 0.04 (89.0\%) & 0.24 (0.0\%) \\
0.10 & G2MILP & 0.72 (100.0\%) & 0.01 (2.0\%) & 0.03 (100.0\%) & 0.02 (0.0\%) \\
0.10 & MILP-StuDio & 972.24 (100.0\%) & 1.38 (100.0\%) & 340.02 (68.0\%) & 198.35 (91.0\%) \\
0.10 & \method & 1000.00 (100.0\%) & 1.98 (100.0\%) & 571.83 (81.0\%) & 720.38 (99.3\%) \\
\bottomrule
\end{tabular}
\end{table*}

\section{Additional Downstream Details}

Table~\ref{tab:app_ps_detail} reports solver work units and node counts
for the downstream PS evaluation. These diagnostics are not used as the
primary comparison metric, but they provide additional context about the
search process.

\begin{table*}[!t]
\centering
\small
\setlength{\tabcolsep}{5pt}
\renewcommand{\arraystretch}{0.98}
\caption{Detailed downstream PS diagnostics. Work and node counts are averaged over 40 held-out original test instances per family.}
\label{tab:app_ps_detail}
\begin{tabular}{@{}llrrrr@{}}
\toprule
\textbf{Dataset} & \textbf{Method} & \textbf{Optimal} & \textbf{Time limit} & \textbf{Work units} & \textbf{Nodes} \\
\midrule
CA & Gurobi baseline & 0/40 & 40/40 & 1325.31 & 5371.62 \\
CA & PS baseline & 0/40 & 40/40 & 1268.95 & 5079.77 \\
CA & PS+MILP-StuDio & 0/40 & 40/40 & 1158.28 & 4708.75 \\
CA & PS+\method & 0/40 & 40/40 & 1425.81 & 5751.00 \\
\midrule
FA & Gurobi baseline & 40/40 & -- & 5.075 & 296.25 \\
FA & PS baseline & 40/40 & -- & 4.743 & 296.93 \\
FA & PS+MILP-StuDio & 40/40 & -- & 5.202 & 337.55 \\
FA & PS+\method & 40/40 & -- & 4.767 & 298.03 \\
\midrule
IP & Gurobi baseline & 0/40 & 40/40 & 625.074 & 458155.88 \\
IP & PS baseline & 0/40 & 40/40 & 676.782 & 250009.95 \\
IP & PS+MILP-StuDio & 0/40 & 40/40 & 923.124 & 337235.65 \\
IP & PS+\method & 0/40 & 40/40 & 865.219 & 323478.58 \\
\midrule
WA & Gurobi baseline & 0/40 & 40/40 & 1489.40 & 1176.22 \\
WA & PS baseline & 0/40 & 40/40 & 1364.39 & 1044.20 \\
WA & PS+MILP-StuDio & 0/40 & 40/40 & 1455.24 & 1279.97 \\
WA & PS+\method & 0/40 & 40/40 & 1516.53 & 1284.53 \\
\bottomrule
\end{tabular}

\vspace{6pt}
\small
\setlength{\tabcolsep}{5pt}
\renewcommand{\arraystretch}{0.98}
\caption{Block-pool scale and nominal block size at $\eta=0.10$. Residual nodes per BU approximate the average number of non-interface rows and variables represented by each extracted block unit. The last column is the largest-to-average block-size ratio, not a fraction of total residual size.}
\label{tab:app_block_scale}
\begin{tabular}{@{}lrrrrrr@{}}
\toprule
\textbf{Dataset} & \textbf{\#BU} & \textbf{\#Master} & \textbf{\#Boundary} & \textbf{Residual Nodes / BU} & \textbf{Compat.} & \textbf{Largest / Avg} \\
\midrule
CA & 29.49 & 1275 & 734.2 & 70.98 & 0.719 & 1.739 \\
FA & 974.8 & 307 & 303 & 20.20 & 1.000 & 0.170 \\
IP & 16.00 & 60 & 380 & 52.38 & 1.000 & 4.304 \\
WA & 5754 & 1938 & 1838 & 21.12 & 0.997 & 0.166 \\
\bottomrule
\end{tabular}
\end{table*}

\section{Additional Diagnostics}

The block-pool statistics expose intermediate structure produced by
\method: the number of extracted block units, master constraints,
boundary variables, compatible replacements, and largest-to-average
block-size ratios.
These quantities make the generator inspectable. For example, WA has many small reusable units, IP has a
compact highly compatible pool, and CA has lower compatibility as the
modification ratio increases. These diagnostics help explain why
similarity, feasibility, and compatibility should be read together
rather than as interchangeable indicators.

\subsection{Block-Pool Scale and Nominal Block Size}

Table~\ref{tab:app_block_scale} expands the $\eta=0.10$ block-pool view
with a nominal residual size per block unit. This quantity is computed
from the average original instance scale after removing detected
interface nodes, divided by the average number of extracted block
units. It is a compact scale diagnostic rather than an exact histogram:
the largest-to-average block-size ratio in the last column records how
concentrated the block pool can become.

\endgroup

\end{document}